\DeclareMathOperator*{\argmax}{arg\,max}
\def\bx{{\bf x}}
\def\by{{\bf y}}
\def\bby{{\bf \bar{y}}}
\def\bbw{{\bf \bar{w}}}
\def\bw{{\bf w}}
\def\bu{{\bf u}}
\def\bm{{\bf m}}
\def\be{{\bf e}}
\def\argmax{\text{argmax}}
\def \nn {\nonumber} 
\def \nnnl {\nonumber \\}
\newcommand{\beqn}{\begin{equation}}
\newcommand{\eeqn}{\end{equation}}
\newcommand{\bitm}{\begin{itemize}}
\newcommand{\eitm}{\end{itemize}}
\newcommand{\bfig}{\begin{figure}}
\newcommand{\efig}{\end{figure}}
\newcommand{\baln}{\begin{align}}
\newcommand{\ealn}{\end{align}}
\newcommand{\btblr}{\begin{tabular}}
\newcommand{\etblr}{\end{tabular}}
\newcommand{\btbl}{\begin{table}}
\newcommand{\etbl}{\end{table}}
\newcommand{\bsec}{\begin{section}}
\newcommand{\esec}{\end{section}}
\newcommand{\bchr}{\begin{chapter}}
\newcommand{\echr}{\end{chapter}}
\newcommand{\bseqn}{\begin{subequations}}
\newcommand{\eesqn}{\end{subequations}}
\newcommand{\inner}[2]{ #1^\top #2}
\newcommand{\expct}{\mathbf E}
\newcommand{\pr}{\mathbf P}
\newcommand{\RR}{\mathbf{R}}
\newcommand{\BlackBox}{\rule{1.5ex}{1.5ex}}  
\newenvironment{proof}{\par\noindent{\bf Proof\ }}{\hfill\BlackBox\\[2mm]}
\newtheorem{theorem}{Theorem}
\newtheorem{lemma}[theorem]{Lemma}
\newtheorem{corollary}[theorem]{Corollary}
\icmltitlerunning{Coactive Learning}
\begin{document} 

\twocolumn[
\icmltitle{Online Structured Prediction via Coactive Learning}

\icmlauthor{Pannaga Shivaswamy}{pannaga@cs.cornell.edu}
\icmlauthor{Thorsten Joachims}{tj@cs.cornell.edu}
\icmladdress{Department of Computer Science, Cornell University, Ithaca NY 14853}

\icmlkeywords{Online Learning, Structured Prediction, Web-search, Recommender Systems}

\vskip 0.3in
]

\begin{abstract}
We propose Coactive Learning as a model of interaction between a learning system and a human user, where both have the common goal of providing results of maximum utility to the user. At each step, the system (e.g. search engine)  receives a context (e.g. query) and predicts an object (e.g. ranking).  The user responds by correcting the system if necessary, providing a slightly improved -- but not necessarily optimal -- object as feedback.  We argue that such feedback can often be inferred from observable user behavior, for example, from clicks in web-search. Evaluating predictions by their cardinal utility to the user, we propose efficient learning algorithms that have ${\cal O}(\frac{1}{\sqrt{T}})$ average regret, even though the learning algorithm never observes cardinal utility values as in conventional online learning. We demonstrate the applicability of our model and learning algorithms on a movie recommendation task, as well as ranking for web-search.

\end{abstract}

\section{Introduction}
In a wide range of systems in use today, the interaction between human and system takes the following form. The user issues a command (e.g. query) and receives a -- possibly structured -- result in response (e.g. ranking). The user then interacts with the results (e.g. clicks), thereby providing implicit feedback about the user's utility function. Here are three examples of such systems and their typical interaction patterns:
\vspace{-0.7\baselineskip}
\begin{description}\addtolength{\itemsep}{-0.55\baselineskip}
\item[Web-search:] In response to a query, a search engine presents the ranking $[A,B,C,D,...]$ and observes that the user clicks on documents $B$ and $D$.
\item[Movie Recommendation:] An online service recommends movie A to a user. However, the user rents movie B after browsing the collection.
\item[Machine Translation:] An online machine translator is used to translate a wiki page from language A to B. The system observes some corrections the user makes to the translated text.
\end{description}
\vspace{-0.7\baselineskip}
In all the above examples, the user provides some feedback about the results of the system. However, the feedback is only an incremental improvement, not necessarily the optimal result. For example, from the clicks on the web-search results we can infer that the user would have preferred the ranking $[B,D,A,C,...]$ over the one we presented. However, this is unlikely to be the best possible ranking. Similarly in the recommendation example, movie $B$ was preferred over movie $A$, but there may have been even better movies that the user did not find while browsing. In summary, the algorithm typically receives a slightly improved result from the user as feedback, but not necessarily the optimal prediction nor any cardinal utilities. We conjecture that many other applications fall into this schema, ranging from news filtering to personal robotics.

Our key contributions in this paper are threefold. First, we formalize Coactive Learning as a model of interaction between a learning system and its user, define a suitable notion of regret, and validate the key modeling assumption -- namely whether observable user behavior can provide valid feedback in our model -- in a web-search user study. Second, we derive learning algorithms for the Coactive Learning Model, including the cases of linear utility models and convex cost functions, and show ${\cal O}(1/\sqrt{T})$  regret bounds in either case with a matching lower bound. The learning algorithms perform structured output prediction (see \cite{bakir}) and thus can be applied in a wide variety of problems.  Several extensions of the model and the algorithm are discussed as well. Third, we provide extensive empirical evaluations of our algorithms on a movie recommendation and a web-search task, showing that the algorithms are highly efficient and effective in practical settings.

\section{Related Work}

The Coactive Learning Model bridges the gap between two forms of feedback that have been well studied in online learning. On one side there is the multi-armed bandit model \cite{ACSF02,AuerCF02}, where an algorithm chooses an action and observes the utility of (only) that action. On the other side, utilities of all possible actions are revealed in the case of learning with expert advice \cite{olbook}. Online convex optimization \cite{Zink03} and online convex optimization in the bandit setting \cite{FKM05} are continuous relaxations of the expert and the bandit problems respectively. Our model, where information about two arms is revealed at each iteration sits between the expert and the bandit setting. Most closely related to Coactive Learning is the dueling bandits setting \cite{Yue/etal/09a,Yue/Joachims/09a}. The key difference is that both arms are chosen by the algorithm in the dueling bandits setting, whereas one of the arms is chosen by the user in the Coactive Learning setting.

While feedback in Coactive Learning takes the form of a preference, it is different from ordinal regression and ranking. Ordinal regression \cite{prank} assumes training examples $(x,y)$, where $y$ is a rank. In the Coactive Learning model, absolute ranks are never revealed. Closely related is learning with pairs of examples \cite{Her2000,FreIye03,ChuGha05} where absolute ranks are not needed; however, existing approaches require an {\em iid} assumption and typically perform batch learning.  There is also a large body of work on ranking (see \cite{Liu:2009}). These approaches are different from Coactive Learning; they require training data $(x,y)$ where $y$ is the {\em optimal} ranking for query $x$. 

\section{Coactive Learning Model}
We now introduce coactive learning as a model of interaction (in rounds) between a learning system (e.g. search engine) and a human (e.g. search user) where both the human and learning algorithm have the same goal (of obtaining good results). At each round $t$, the learning algorithm observes a context $\bx_t \in {\cal X}$ (e.g. a search query) and presents a structured object $\by_t \in {\cal Y}$  (e.g. a ranked list of URLs). The utility of $\by_t \in {\cal Y}$ to the user for context $\bx_t \in {\cal X}$ is described by a utility function $U(\bx_t,\by_t)$, which is unknown to the learning algorithm. As feedback the  user returns an improved object $\bby_t \in {\cal Y}$ (e.g. reordered list of URLs), i.e., 
\begin{align}
\label{eq:pref-feedback}
U(\bx_t,\bby_t) > U(\bx_t,\by_t),
\end{align}
when such an object $\bby_t$ exists. In fact, we will also allow violations of  (\ref{eq:pref-feedback})  when we formally model user feedback in Section \ref{ss:feedback}.
The process by which the user generates the feedback $\bby_t$ can be understood as an approximate utility-maximizing search, but over a user-defined subset ${\cal \bar{Y}}_t$ of all possible ${\cal Y}$. This models an approximately and boundedly rational user that may employ various tools (e.g., query reformulations, browsing) to perform this search. Importantly, however, the feedback $\bby_t$ is typically not the optimal label 
\begin{align}
\label{eq:ystar}
\by_t^* := \argmax_{\by \in {\cal Y}} U(\bx_t, \by).
\end{align}
In this way, Coactive Learning covers settings where the user cannot manually optimize the $\argmax$ over the full ${\cal Y}$ (e.g. produce the best possible ranking in web-search), or has difficulty expressing a bandit-style cardinal rating for $\by_t$ in a consistent manner. This puts our preference feedback $\bby_t$ in stark contrast to supervised learning approaches which require $(\bx_t,\by_t^*)$. But even more importantly, our model implies that reliable preference feedback \eqref{eq:pref-feedback} can be derived from observable user behavior (i.e., clicks), as we will demonstrate in Section~\ref{sec:userstudy} for web-search. We conjecture that similar feedback strategies also exist for other applications, where users can be assumed to act approximately and boundedly rational according to $U$. 

Despite the weak preference feedback, the aim of a coactive learning algorithm is  to still present objects with utility close to that of the optimal $\by_t^*$. Whenever, the algorithm presents an object $\by_t$ under context $\bx_t$, we say that it suffers a regret $U(\bx_t,\by_t^*) - U(\bx_t,\by_t)$ at time step $t$. Formally, we consider the average regret suffered by an algorithm over $T$ steps as follows: 
\begin{align}
\label{eq:linregret}
REG_T = \frac{1}{T} \sum_{t=1}^T \left( U(\bx_t,\by_t^*) - U(\bx_t,\by_t) \right).
\end{align}
The goal of the learning algorithm is to minimize $REG_T$, thereby providing the human with predictions $\by_t$ of high utility. Note, however, that a cardinal value of $U$ is never observed by the learning algorithm, but $U$ is only revealed ordinally through preferences \eqref{eq:pref-feedback}.

\subsection{Quantifying Preference Feedback Quality}
\label{ss:feedback}
To provide any theoretical guarantees about the regret of a learning algorithm in the coactive setting, we need to quantify the quality of the user feedback. Note that this quantification is a tool for theoretical analysis, not a prerequisite or parameter to the algorithm. We quantify feedback quality by how much improvement $\bby$ provides in utility space. In the simplest case, we say that user feedback  is {\em strictly $\alpha$-informative} when the following inequality is satisfied:
\begin{align}
\label{eq:inf-feedback}
\!\!\!\!  U(\bx_t,\bby_t) - U(\bx_t,\by_t) \ge \alpha (U(\bx_t,\by_t^*) - U(\bx_t,\by_t)).
\end{align}
In the above inequality, $\alpha \in (0,1]$ is an unknown parameter. Feedback is such that utility of $\bby_t$ is higher than that of $\by_t$ by a fraction $\alpha$ of the maximum possible utility range $U(\bx_t,\by_t^*) - U(\bx_t,\by_t)$. Violations of the above feedback model are allowed by introducing slack variables $\xi_t \ge 0$:\footnote{Strictly speaking, the value of the slack variable depends on the choice of $\alpha$ and the definition of utility. However, for brevity, we do not explicitly show this dependence.}
\begin{align}
\label{eq:inf-feedback-relax}
\!\!\! U(\bx_t,\bby_t)\! - \! U(\bx_t,\by_t) \!  \ge \alpha (U(\bx_t,\!\by_t^*)\! -\! U(\bx_t,\!\by_t)\!)\!  - \! \xi_t.
\end{align}
We refer to the above feedback model as {\em $\alpha$-informative} feedback. Note also that it is possible to express feedback of any quality using \eqref{eq:inf-feedback-relax} with an appropriate value of $\xi_t$. Our regret bounds will contain $\xi_t$, quantifying to what extent the strict $\alpha$-informative modeling assumption is violated.

Finally, we will also consider an even weaker feedback model where a positive utility gain is only achieved in expectation over user actions: 
\begin{align}
\label{eq:exp-feedback-relax}
\!\!\!\!\!\!\!\! \expct_t [U\!(\!\bx_t,\!\bby_t\!)  \!-\! U\!(\!\bx_t,\!\by_t\!)] \!
\ge \! \alpha ( U\!(\!\bx_t,\!\by_t^*\!) \!-\! U\!(\!\bx_t,\!\by_t\!)\! ) \!-\!  \bar{\xi}_t.\!\!\!\!\!\!
\end{align}
We refer to the above feedback as {\em expected $\alpha$-informative} feedback. In the above equation, the expectation is over the user's choice of $\bby_t$  given $\by_t$ under context $\bx_t$ (i.e., under a distribution $\pr_{\bx_t}[\bby_t|\by_t]$ which is dependent on $\bx_t$). 

\subsection{User Study: Preferences from Clicks} \label{sec:userstudy}

\begin{figure}[t]
\begin{center}
\vskip -0.05in
\includegraphics[width=3.3in]{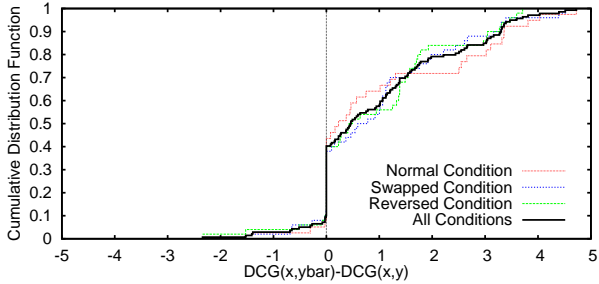}
\vskip -0.1in
\caption{\label{fig:cdf_alphainform} Cumulative distribution of utility differences between presented ranking $\by$ and click-feedback ranking $\bby$ in terms of $DCG@10$ for three experimental conditions and overall.}
\end{center}
\vskip -0.25in
\end{figure}

We now validate that reliable preferences as specified in Equation~\eqref{eq:pref-feedback} can indeed be inferred from implicit user behavior. In particular, we focus on preference feedback from clicks in web-search and draw upon data from a user study \cite{Joachims/etal/07a}. In this study, subjects (undergraduate students, $n=16$) were asked to answer 10 questions -- 5 informational, 5 navigational -- using the Google search engine. All queries, result lists, and clicks were recorded. For each subject, queries were grouped into query chains by question\footnote{This was done manually, but can be automated with high accuracy \cite{Jones/Klinkner/08}.}. On average, each query chain contained $2.2$ queries and $1.8$ clicks in the result lists.

We use the following strategy to infer a ranking $\bby$ from the user's clicks: prepend to the ranking $\by$ from the first query of the chain all results that the user clicked throughout the whole query chain. To assess whether $U(\bx,\bby)$ is indeed larger than $U(\bx,\by)$ as assumed in our learning model, we measure utility in terms of a standard measure of retrieval quality from Information Retrieval. We use $DCG@10(\bx,\by)=\sum_{i=1}^{10} \frac{r(\bx,\by[i])}{\log{i+1}}$, where $r(\bx,\by[i])$ is the relevance score of the i-th document in ranking $\by$ (see e.g. \cite{Manning/etal/08}). To get ground-truth relevance assessments $r(\bx,d)$, five human assessors were asked to manually rank the set of results encountered during each query chain. We then linearly normalize the resulting ranks to a relative relevance score $r(\bx,d) \in [0..5]$ for each document. 

We can now evaluate whether the feedback ranking $\bby$ is indeed better than the ranking $\by$ that was originally presented, i.e. $DCG@10(\bx,\bby) > DCG@10(\bx,\by)$. Figure~\ref{fig:cdf_alphainform} plots the Cumulative Distribution functions (CDFs) of $DCG@10(\bx,\bby)-DCG@10(\bx,\by)$ for three experimental conditions, as well as the average over all conditions. All CDFs are shifted far to the right of $0$, showing that preference feedback from our strategy is highly accurate and informative. Focusing first on the average over all conditions, the utility difference is strictly positive on $\sim 60\%$ of all queries, and strictly negative on only $\sim 10\%$. This imbalance is significant (binomial sign test, $p<0.0001$). Among the remaining $\sim 30\%$ of cases where the DCG@10 difference is zero, $88\%$ are due to $\bby = \by$ (i.e. click only on top 1 or no click). Note that a learning algorithm can easily detect those cases and may explicitly eliminate them as feedback. Overall, this shows that implicit feedback can indeed produce accurate preferences.

What remains to be shown is whether the reliability of the feedback is affected by the quality of the current prediction, i.e., $U(\bx_t,\by_t)$. In the user study, some users actually received results for which retrieval quality was degraded on purpose. In particular, about one third of the subjects received Google's top 10 results in reverse order (condition ``reversed'') and another third received rankings with the top two positions swapped (condition ``swapped''). As Figure~\ref{fig:cdf_alphainform} shows, we find that users provide accurate preferences across this substantial range of retrieval quality. Intuitively, a worse retrieval system may make it harder to find good results, but it also makes an easier baseline to improve upon. This intuition is formally captured in our definition of $\alpha$-informative feedback. The optimal value of the $\alpha$ vs.~$\xi$ trade-off, however, will likely depend on many application-specific factors, like user motivation, corpus properties, and query difficulty. In the following, we therefore present algorithms that do not require knowledge of $\alpha$, theoretical bounds that hold for any value of $\alpha$, and experiments that explore a large range of $\alpha$.



\section{Coactive Learning Algorithms}
In this section, we present algorithms for minimizing regret in the coactive learning model. In the rest of this paper, we use a linear model for the utility function,
\begin{align}
\label{eq:linutil}
U(\bx,\by) = \bw_*^{\top} \phi(\bx,\by),
\end{align}
where $\bw_* \in \mathbf{R}^N$ is an unknown parameter vector and $\phi: {\cal X} \times {\cal Y} \rightarrow \RR^N$ is a joint feature map such that $\|\phi(\bx,\by)\|_{\ell_2} \le R$ for any $\bx \in {\cal X}$ and $\by \in {\cal Y}$. Note that both $\bx$ and $\by$ can be structured objects.

We start by presenting and analyzing the most basic algorithm for the coactive learning model, which we call the {\em Preference Perceptron} (Algorithm \ref{perceptron}). The Preference Perceptron maintains a weight vector $\bw_t$ which is initialized to ${\mathbf 0}$. At each time step $t$, the algorithm observes the context $\bx_t$ and presents an object $\by$ that maximizes $\bw_t^\top \phi(\bx_t,\by)$. The algorithm then observes user feedback $\bby_t$ and the weight vector $\bw_t$ is updated in the direction $\phi(\bx_t,\bby_t) - \phi(\bx_t,\by_t)$.

\begin{algorithm}[tb]
\begin{algorithmic}
\STATE Initialize $\bw_1 \leftarrow {\mathbf 0}$
\FOR{$t=1$ {\bf to} $T$}
\STATE Observe $\bx_t$
\STATE Present $\by_t \leftarrow \argmax_{\by \in {\cal Y}} \bw_t^\top \phi(\bx_t,\by)$
\STATE Obtain feedback $\bby_t$
\STATE Update: $\bw_{t+1} \leftarrow \bw_{t} + \phi(\bx_t,\bby_t) - \phi(\bx_t,\by_t)$
\ENDFOR
\end{algorithmic}
\caption{\label{perceptron} Preference Perceptron.}
\vskip -0.001in
\end{algorithm}

\begin{theorem}
\label{thm:bound}
The average regret of the preference perceptron algorithm can be upper bounded, for any $\alpha \in (0,1]$ and for any $\bw_*$ as follows:
\begin{align}
\label{eq:percept-reg}
 REG_T  \le \frac{1}{\alpha T} \! \sum_{t=1}^T \xi_t + \frac{2R \|\bw_* \|}{ \alpha \sqrt{T}}.
\end{align}
\end{theorem}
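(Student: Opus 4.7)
The plan is to imitate the classical Perceptron convergence proof, bounding the weight vector $\bw_{T+1}$ from two sides: an upper bound on $\|\bw_{T+1}\|$ that uses only the algorithm's argmax choice, and a lower bound on $\bw_*^\top \bw_{T+1}$ that pulls in the $\alpha$-informative feedback assumption so that the regret appears explicitly. Cauchy--Schwarz then glues the two bounds together.

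First I would unroll the update to write $\bw_{T+1} = \sum_{t=1}^T (\phi(\bx_t,\bby_t) - \phi(\bx_t,\by_t))$ and expand $\|\bw_{t+1}\|^2 = \|\bw_t\|^2 + 2\bw_t^\top(\phi(\bx_t,\bby_t) - \phi(\bx_t,\by_t)) + \|\phi(\bx_t,\bby_t) - \phi(\bx_t,\by_t)\|^2$. The crucial observation is that $\by_t = \argmax_{\by} \bw_t^\top \phi(\bx_t,\by)$, so the cross term is nonpositive, and the quadratic term is at most $(2R)^2$ by the triangle inequality and $\|\phi(\bx,\by)\| \le R$. Telescoping from $\bw_1 = \mathbf{0}$ yields $\|\bw_{T+1}\|^2 \le 4R^2 T$, i.e., $\|\bw_{T+1}\| \le 2R\sqrt{T}$.

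Next I would lower-bound $\bw_*^\top \bw_{T+1}$. Using linearity of $U$ in $\phi$, this inner product equals $\sum_{t=1}^T \bigl(U(\bx_t,\bby_t) - U(\bx_t,\by_t)\bigr)$. Applying the $\alpha$-informative feedback inequality \eqref{eq:inf-feedback-relax} termwise gives
\begin{align*}
\bw_*^\top \bw_{T+1} \;\ge\; \alpha \sum_{t=1}^T \bigl(U(\bx_t,\by_t^*) - U(\bx_t,\by_t)\bigr) \;-\; \sum_{t=1}^T \xi_t \;=\; \alpha T \cdot REG_T \;-\; \sum_{t=1}^T \xi_t.
\end{align*}
Combining with Cauchy--Schwarz, $\bw_*^\top \bw_{T+1} \le \|\bw_*\|\,\|\bw_{T+1}\| \le 2R\|\bw_*\|\sqrt{T}$, and dividing through by $\alpha T$ produces the stated bound \eqref{eq:percept-reg}.

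I do not expect any serious obstacle: the proof is a textbook Perceptron/mistake-bound argument adapted to the coactive setup. The only subtle point worth highlighting is that the argmax choice of $\by_t$ is exactly what makes the cross term nonpositive, even though $\bby_t$ is produced adversarially by the user; without the argmax choice the update could enlarge $\|\bw_t\|^2$ far more, and the $\mathcal{O}(1/\sqrt{T})$ rate would be lost. Everything else---the $4R^2$ bound on the squared step, the linearity of $U$ that turns the update into a regret telescope, and Cauchy--Schwarz---is routine. Note that the argument never uses any property of $\alpha$ except in the final rearrangement, so the bound holds uniformly over $\alpha \in (0,1]$ as claimed.
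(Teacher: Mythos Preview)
Your proposal is correct and essentially identical to the paper's own proof: both bound $\|\bw_{T+1}\|^2 \le 4R^2T$ via the argmax property and the radius bound, express $\bw_*^\top \bw_{T+1}$ as $\sum_t(U(\bx_t,\bby_t)-U(\bx_t,\by_t))$, and combine via Cauchy--Schwarz and the $\alpha$-informative inequality. The only cosmetic difference is that the paper applies Cauchy--Schwarz before invoking \eqref{eq:inf-feedback-relax}, whereas you apply the feedback inequality first; the two orderings are equivalent.
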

\begin{proof}
First, consider $\|\bw_{T+1}\|^2$, we have,
\begin{align}
&\bw_{T+1}^\top\bw_{T+1} = \bw_{T}^\top \bw_{T} + 2 \bw_T^\top (\phi(\bx_T,\bby_T) - \phi(\bx_T,\by_T)) \nnnl
&+ ( \phi(\bx_T,\bby_T) - \phi(\bx_T,\by_T) )^\top(  \phi(\bx_T,\bby_T) - \phi(\bx_T,\by_T)  \nnnl
& \le \bw_{T}^\top\bw_{T} + 4 R^2  \le 4R^2T. \nn
\end{align}
On line one, we simply used our update rule from algorithm \ref{perceptron}. On line two, we used the fact that $\bw_T^\top  (\phi(\bx_T,\bby_T) - \phi(\bx_T,\by_T))  \le 0$ from the choice  of $\by_T$ in Algorithm \ref{perceptron} and that $\|\phi(\bx,\by)\| \le R$.  Further, from the update rule in algorithm \ref{perceptron}, we have,
\begin{align}
\bw_{T+1}^\top \bw_* &= \bw_{T}^\top\bw_* + (\phi(\bx_T,\bby_T) - \phi(\bx_T,\by_T))^\top\bw_* \nnnl
&= \sum_{t=1}^T  \left(  U(\bx_t,\bby_t) - U(\bx_t,\by_t) \right).
\end{align}
We now use the fact that $\inner{\bw_{T+1}}{\bw_*} \le \| \bw_* \| \| \bw_{T+1} \|$ (Cauchy-Schwarz inequality), which implies
$$  \sum_{t=1}^T    \left(  U(\bx_t,\bby_t) - U(\bx_t,\by_t) \right)  \le 2R\sqrt{T}\|\bw_*\|.$$
From the $\alpha$-informative modeling of the user feedback in \eqref{eq:inf-feedback-relax}, we have
$$  \alpha \sum_{t=1}^T \left(U(\bx_t,\by^*_t) - U(\bx_t,\by_t)\right)  - \sum_{t=1}^T \xi_t \le 2R\sqrt{T}\|\bw_*\|,$$
from which the claimed result follows.
\end{proof}



The first term in the regret bound denotes the quality of feedback in terms of violation of the strict $\alpha$-informative feedback. In particular, if the user feedback is strictly $\alpha$-informative, then all slack variables in \eqref{eq:percept-reg} vanish and $REG_T  = {\cal O}(1/\sqrt{T})$.

Though user feedback is modeled via $\alpha$-informative feedback, the algorithm itself does not require the knowledge of $\alpha$; $\alpha$ plays a role only in the analysis. 

Although the preference perceptron appears similar to the standard perceptron for multi-class classification problems, there are key differences. First, the standard perceptron algorithm requires the {\em true label $\by^*$} as feedback, whereas much weaker feedback $\bby$ suffices for our algorithm.  Second, the standard analysis of the perceptron bounds the number of mistakes made by the algorithm based on margin and the radius of the examples. In contrast, our analysis bounds a different regret that captures a graded notion of utility. 


An appealing aspect of our learning model is that several interesting extensions are possible. We discuss some of them  in the rest of this  section. 

\subsection{Lower Bound}
We now show that the upper bound in Theorem \ref{thm:bound} cannot be improved in general.

\begin{lemma}
For any coactive learning algorithm ${\cal A}$ with linear utility, there exist  $\bx_t$, objects ${\cal Y}$  and $\bw_*$ such that  $REG_T$ of  ${\cal A}$  in $T$ steps is $\Omega(1/\sqrt{T})$.
\end{lemma}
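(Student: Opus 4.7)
The plan is a standard minimax (Le Cam / Yao) argument: construct two coactive-learning instances whose optimal actions differ in utility by $\Theta(1/\sqrt{T})$ per round but whose feedback distributions are nearly indistinguishable, so that any algorithm must commit to the wrong action on a constant fraction of rounds in at least one of the two instances.

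Concretely, I would fix a single context $\bx$, work in a modest output space $\mathcal{Y}$ (at least two actions, with features of norm at most $R$), and take two candidate truths $\bw_*^{\pm}$ with $\|\bw_*^\pm\|\le 1$ chosen so that (i)~the optimal action is one element of $\mathcal{Y}$ under $\bw_*^+$ and a different element under $\bw_*^-$, and (ii)~the per-round regret from playing the wrong one is exactly $2\epsilon$, for $\epsilon = \Theta(1/\sqrt{T})$. The adversary then commits to an expected $\alpha$-informative feedback rule (cf.\ \eqref{eq:exp-feedback-relax}) that is as uninformative as possible---returning the optimal action with probability $\alpha$ when the learner is wrong and echoing the learner's own play otherwise, with a small amount of symmetric noise folded in to avoid support mismatches between the two worlds. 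A one-line expectation calculation verifies that this rule satisfies \eqref{eq:exp-feedback-relax} with $\bar{\xi}_t = 0$ in both worlds simultaneously.

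Given the construction, a chain-rule bound on the KL divergence between the two $T$-round transcript distributions gives $\mathrm{KL}(P_+\,\|\,P_-) = O(\alpha T \epsilon^2)$, which is $O(1)$ for $\epsilon = \Theta(1/\sqrt{T})$. Pinsker's inequality then implies the total variation is bounded away from $1$, and the standard two-point Le Cam lemma yields $\Omega(T)$ wrong plays under at least one world in expectation; cumulative regret is therefore $\Omega(\epsilon T) = \Omega(\sqrt{T})$, i.e.\ $REG_T = \Omega(1/\sqrt{T})$. Randomized algorithms are handled by conditioning on their random tape (Yao's principle), and the ``there exist'' statement follows by picking the worse of the two worlds.

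The main obstacle is the joint design of $\mathcal{Y}$ and the adversarial feedback rule: the feedback must simultaneously be a valid $\alpha$-informative response in \emph{both} candidate worlds (so the algorithm cannot reject an instance as inconsistent with the model) \emph{and} induce observation distributions whose per-round KL scales as $O(\epsilon^2)$ rather than blowing up because of a support mismatch. Echoing the learner's play as the ``null'' feedback, plus a small amount of symmetric noise to align supports across worlds, is the key design idea that reconciles these competing requirements; the remaining ingredients---parameter tuning, the chain-rule KL bound, Pinsker, and the two-point Le Cam inequality---are routine.
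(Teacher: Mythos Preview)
Your Le Cam two-point argument does not go through as written, and the gap is not cosmetic. The KL bound you claim, $\mathrm{KL}(P_+\|P_-)=O(\alpha T\epsilon^2)$, is borrowed from cardinal-feedback bandit lower bounds, where the learner observes a noisy \emph{numerical} reward whose distribution shifts by $\epsilon$ between worlds. In the coactive model the feedback is an element of $\mathcal{Y}$, i.e.\ a preference, and preferences are scale-invariant: ``the user returns $a$ rather than $b$'' carries the same information whether $U(a)-U(b)$ equals $2$ or $2\epsilon$. Concretely, with two actions and $\bar{\xi}_t=0$, if the learner plays the optimal action the feedback must put all mass on that action (otherwise the expected utility of $\bby_t$ drops below that of $\by_t$, forcing positive slack). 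Hence when the learner plays $b$, world $-$ (where $b$ is optimal) returns $b$ with probability $1$, while world $+$ must return $a$ with probability at least $\alpha$. The two feedback laws differ by $\Theta(\alpha)$ in total variation \emph{independently of $\epsilon$}, so the per-round KL is $\Omega(1)$ (or infinite, by support mismatch), not $O(\epsilon^2)$. Your ``symmetric noise to align supports'' does not rescue this: any mass on the suboptimal action when the learner already plays $\by_t^*$ creates $\bar{\xi}_t>0$, contradicting your own verification step.

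The paper sidesteps the information-theoretic route entirely. It takes $\mathcal{Y}=\{-1,+1\}$, $\phi(\bx,\by)=\by\bx$, and uses $T$ \emph{orthogonal} contexts $\be_1,\dots,\be_T\in\RR^T$. Because the contexts are orthogonal, the feedback at round $s$ reveals nothing about the sign of the $t$-th coordinate of $\bw_*$ for $t\neq s$; the adversary simply sets that coordinate to $-\by_t/\sqrt{T}$ after seeing the algorithm's play (or, for randomized $\mathcal{A}$, draws each coordinate uniformly from $\{\pm 1/\sqrt{T}\}$ and applies Yao). The resulting feedback $\bby_t=-\by_t=\by_t^*$ is deterministic, strictly $\alpha$-informative with $\alpha=1$ and zero slack, and the regret is exactly $2/\sqrt{T}$ every round. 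The leverage comes from varying $\bx_t$ across orthogonal directions, not from making feedback statistically weak---which is precisely the degree of freedom your single-context construction gives up.
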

\begin{proof}
Consider a problem where ${\cal Y} = \{-1,+1\}, {\cal X} = \{\bx \in \mathbf{R}^T: \|\bx\|=1\}$. Define the joint feature map as $\phi(\bx,\by) = \by \bx$. Consider $T$ contexts $\be_1, \ldots , \be_T$ such that $\be_j$ has only the $j^{th}$ component equal to one and all the others equal to zero. Let $\by_1, \ldots \by_T$ be the sequence of outputs of ${\cal A}$ on contexts $\be_1, \ldots , \be_T$. Construct $\bw_* = [-\by_1/\sqrt{T}~-\by_2/\sqrt{T} \cdots -\by_T/\sqrt{T}]^\top$, we have for this construction $\|\bw_*\| = 1$. Let the user feedback on the $t^{th}$ step be $-\by_t$. With these choices, the user feedback is always $\alpha$-informative with $\alpha=1$ since $\by_t^* = -\by_t$. Yet, the regret of the algorithm is $\frac{1}{T} \sum_{t=1}^T (\bw_*^\top\phi(\be_t,\by_t^*) - \bw_*^\top\phi(\be_t,\by_t)) = \Omega(\frac{1}{\sqrt{T}})$.
\end{proof}\vspace*{-0.5cm}

\subsection{Batch Update}
In some applications, due to high volumes of feedback, it might not be possible to do an update after every round. For such scenarios, it is natural to consider a variant of Algorithm~\ref{perceptron} that makes an update every $k$ iterations; the algorithm simply uses $\bw_{t}$ obtained from the previous update until the next update. It is easy to show the following regret bound for batch updates:
\begin{align}
REG_T  \le \frac{1}{\alpha T} \! \sum_{t=1}^T \xi_t + \frac{2R \|\bw_* \| \sqrt{k}}{ \alpha \sqrt{T}}. \nn
\end{align}
 
\subsection{Expected $\alpha$-Informative Feedback} \label{sec:expanalysis}
So far, we have characterized user behavior in terms of deterministic feedback actions. However, if a bound on the expected regret suffices, the weaker model of Expected $\alpha$-Informative Feedback from Equation~(\ref{eq:exp-feedback-relax}) is applicable. 

\begin{corollary}
\label{cor:bound}
Under  expected $\alpha$-informative feedback model, the expected regret (over user behavior distribution) of the preference perceptron algorithm can be upper bounded as follows:
\begin{align}
\label{eq:percept-expct-reg}
\expct[REG_T]  \le \frac{1}{\alpha T} \! \sum_{t=1}^T  \bar{\xi}_t + \frac{2R \|\bw_* \|}{ \alpha \sqrt{T}}.
\end{align}
\end{corollary}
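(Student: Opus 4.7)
The plan is to mirror the proof of Theorem~\ref{thm:bound} essentially line by line, with the only new ingredient being the tower property of conditional expectation to handle the randomness in the user's feedback $\bby_t$. The key observation is that two of the three ingredients in the original proof are actually pathwise statements that survive any source of randomness: the norm growth bound on $\|\bw_{T+1}\|^2$ and the Cauchy--Schwarz inequality. Only the step that converts a utility gain into a regret bound via the $\alpha$-informative assumption needs to be handled in expectation.

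First I would note that the inequality
\[
\bw_T^\top(\phi(\bx_T,\bby_T) - \phi(\bx_T,\by_T)) \le 0
\]
holds sample-path by sample-path, because $\by_T$ is chosen by the algorithm to be an $\argmax$ of $\bw_T^\top\phi(\bx_T,\cdot)$, regardless of how $\bby_T$ is (randomly) produced. Consequently, the telescoping bound $\|\bw_{T+1}\|^2 \le 4R^2 T$ continues to hold almost surely, and so does the Cauchy--Schwarz consequence
\[
\sum_{t=1}^T \bigl(U(\bx_t,\bby_t) - U(\bx_t,\by_t)\bigr) \;=\; \bw_{T+1}^\top\bw_* \;\le\; 2R\sqrt{T}\,\|\bw_*\|.
\]
Taking expectation over the joint distribution of the $\bby_t$'s preserves the inequality, yielding $\sum_{t=1}^T \expct[U(\bx_t,\bby_t) - U(\bx_t,\by_t)] \le 2R\sqrt{T}\,\|\bw_*\|$.

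Next I would invoke the expected $\alpha$-informative feedback assumption \eqref{eq:exp-feedback-relax}. Conditioning on the history up to round $t$ (which determines $\bx_t$, $\bw_t$, and $\by_t$), the assumption states $\expct_t[U(\bx_t,\bby_t) - U(\bx_t,\by_t)] \ge \alpha(U(\bx_t,\by_t^*) - U(\bx_t,\by_t)) - \bar{\xi}_t$. Applying the tower property and summing over $t$ gives
\[
\sum_{t=1}^T \expct[U(\bx_t,\bby_t) - U(\bx_t,\by_t)] \;\ge\; \alpha \sum_{t=1}^T \expct[U(\bx_t,\by_t^*) - U(\bx_t,\by_t)] - \sum_{t=1}^T \bar{\xi}_t.
\]
Combining with the previous display and dividing by $\alpha T$ yields the stated bound on $\expct[REG_T]$.

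There is no real obstacle here; the only subtle point is making sure the Cauchy--Schwarz step is applied pathwise (so that no $\expct\|\bw_{T+1}\|$ versus $\sqrt{\expct\|\bw_{T+1}\|^2}$ issue arises via Jensen's inequality) and that the $\alpha$-informative inequality is applied under the correct conditioning so that $\by_t^*$, $\by_t$, and $\bx_t$ on the right-hand side are treated as deterministic given the past, while only $\bby_t$ carries the conditional randomness. Once those bookkeeping matters are in place, the derivation reduces to the deterministic one in Theorem~\ref{thm:bound}.
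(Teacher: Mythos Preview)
Your argument is correct, and it is in fact slightly cleaner than the paper's own proof. The paper takes expectations first in the norm recursion to obtain $\expct[\|\bw_{T+1}\|^2] \le 4R^2T$, and then needs Jensen's inequality for the concave square root to pass from $\expct[\|\bw_{T+1}\|]$ to $\sqrt{\expct[\|\bw_{T+1}\|^2]}$ after applying Cauchy--Schwarz in expectation. You instead observe that the cross term $\bw_t^\top(\phi(\bx_t,\bby_t)-\phi(\bx_t,\by_t))\le 0$ holds sample-path by sample-path, so the bound $\|\bw_{T+1}\|^2\le 4R^2T$ is already almost sure; Cauchy--Schwarz can then be applied pathwise and the expectation taken afterwards, bypassing Jensen entirely. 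Both routes produce the identical constant $2R\|\bw_*\|/(\alpha\sqrt{T})$, but your version is more elementary; the paper's version would be the necessary one only in a setting where the cross-term nonpositivity held merely in expectation rather than pointwise. Your handling of the tower property for the expected $\alpha$-informative assumption is also correct and matches what the paper needs.
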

The above corollary can be proved by following the argument of Theorem \ref{thm:bound}, but taking expectations over user feedback:
$\expct [\bw_{T+1}^\top\bw_{T+1}] 
= \expct[ \bw_{T}^\top \bw_{T}] + \expct[ 2 \bw_T^\top (\phi(\bx_T,\bby_T) - \phi(\bx_T,\by_T))   ]
+ \expct_T [( \phi(\bx_T,\bby_T) - \phi(\bx_T,\by_T) )^\top(  \phi(\bx_T,\bby_T) - \phi(\bx_T,\by_T)] 
\le \expct[\bw_T^\top \bw_T] + 4R^2.$  In the above, $\expct$ denotes expectation over all user feedback $\bby_t$ given $\by_t$ under the context $\bx_t$. It follows that  $\expct[\bw_{T+1}^\top \bw_{T+1}] \le 4TR^2$.


Applying Jensen's inequality on the concave function $\sqrt{\cdot}$, we get:
$\expct[\bw_T^\top\bw_*] \le \| \bw_* \| \expct[ \| \bw_T \|] \le \| \bw_* \| \sqrt{\expct[  \bw_T^\top\bw_T ]}.$ 
The corollary follows from the definition of expected $\alpha$-informative feedback.

\subsection{Convex Loss Minimization}
\label{sec:convex}
We now generalize our results to minimize convex losses defined on the linear utility differences.  We assume that at every time step $t$, there is an (unknown) convex loss function $c_t: \mathbf{R} \rightarrow \mathbf{R}$ which determines the loss $c_t(U(\bx_t,\by_t) - U(\bx_t,\by_t^*))$ at time $t$. The functions $c_t$ are assumed to be non-increasing. Further, sub-derivatives of the $c_t$'s are assumed to be bounded (i.e., $c_t'(\theta) \in [-G,0]$ for all $t$ and for all $\theta \in \mathbf{R}$).  The vector $\bw_*$ which determines the utility of $\by_t$ under context $\bx_t$ is assumed from a closed and bounded convex set ${\cal B}$ whose diameter is denoted as $|{\cal B}|$.

Algorithm \ref{convex}  minimizes the average convex loss. There are two differences between this algorithm and Algorithm \ref{perceptron}. Firstly, there is a rate $\eta_t$ associated with the update at time $t$. Moreover, after every update, the resulting vector $\bbw_{t+1}$ is projected back to the set ${\cal B}$. We have the following result for Algorithm \ref{convex}, a proof of which is provided in an extended version of this paper \cite{coactivearxiv}.

\begin{algorithm}[tb]
\begin{algorithmic}
\STATE Initialize $\bw_1 \leftarrow {\mathbf 0}$
\FOR{$t=1$ {\bf to} $T$}
\STATE Set $\eta_t \leftarrow \frac{1}{\sqrt{t}}$
\STATE Observe $\bx_t$
\STATE Present $\by_t \leftarrow \argmax_{\by \in {\cal Y}} \inner{\bw_t}{\phi(\bx_t,\by)}$
\STATE Obtain feedback $\bby_t$
\STATE Update: $\bbw_{t+1} \leftarrow \bw_{t} + \eta_t G  (\phi(\bx_t,\bby_t) - \phi(\bx_t,\by_t) )$
\STATE Project: $\bw_{t+1} \leftarrow \arg \min_{\bu \in {\cal B}} \|\bu - \bbw_{t+1}\|^2$
\ENDFOR
\end{algorithmic}
\caption{\label{convex} Convex Preference Perceptron.}
\end{algorithm}
\vskip -0.2in

\begin{theorem}
\label{thm:conv-bound}
For the convex preference perceptron, we have, for any $\alpha \in (0,1]$ and any $\bw_* \in {\cal B}$,
\begin{align}
\label{eq:conv-bound}
& \frac{1}{T} \sum_{t=1}^T c_t(U(\bx_t,\by_t) - U(\bx_t,\by_t^*) )
\le  \frac{1}{T} \sum_{t=1}^T c_t \left( 0 \right) \nnnl
 + &\frac{2G}{\alpha T} \sum_{t =1}^T \xi_t +  \frac{1}{\alpha}\left( \frac{|{\cal B}|G}{2\sqrt{T}}\right.  \left. + \frac{|{\cal B}|G}{T} + \frac{4R^2G}{\sqrt{T}}   \right).
\end{align}
\end{theorem}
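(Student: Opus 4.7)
The plan is to view Algorithm~\ref{convex} as projected online gradient descent (OGD) with step size $\eta_t = 1/\sqrt{t}$ applied to the pseudo-gradients $g_t := G(\phi(\bx_t,\by_t) - \phi(\bx_t,\bby_t))$, translate the resulting $\mathcal{O}(\sqrt{T})$ bound on $\sum_t g_t^\top(\bw_t - \bw_*)$ into a bound on the cumulative preference gap $\sum_t [U(\bx_t,\bby_t) - U(\bx_t,\by_t)]$, and then lift that to the claimed loss bound using convexity of $c_t$ together with the $\alpha$-informative feedback assumption~\eqref{eq:inf-feedback-relax}.

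For the OGD step, since $\bw_* \in \mathcal{B}$ and the projection $\Pi_{\mathcal{B}}$ onto the convex set $\mathcal{B}$ is non-expansive, I would expand $\|\bw_{t+1} - \bw_*\|^2 \le \|\bw_t - \eta_t g_t - \bw_*\|^2$ and rearrange into the per-step inequality
\[
g_t^\top(\bw_t - \bw_*) \le \frac{\|\bw_t - \bw_*\|^2 - \|\bw_{t+1} - \bw_*\|^2}{2\eta_t} + \frac{\eta_t}{2}\|g_t\|^2.
\]
Summing over $t$, the first term telescopes (exploiting monotonicity of $1/(2\eta_t) = \sqrt{t}/2$ and the diameter bound $\|\bw_t - \bw_*\|^2 \le |\mathcal{B}|^2$), producing a contribution of the order $|\mathcal{B}|^2/(2\eta_T)$; the second term is controlled by $\|g_t\| \le 2RG$ and $\sum_t 1/\sqrt{t} \le 2\sqrt{T}$. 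The $\argmax$ choice of $\by_t$ yields $\bw_t^\top[\phi(\bx_t,\by_t) - \phi(\bx_t,\bby_t)] \ge 0$, so $g_t^\top(\bw_t - \bw_*) \ge G\,[U(\bx_t,\bby_t) - U(\bx_t,\by_t)]$, which together with the OGD bound gives an $\mathcal{O}(\sqrt{T})$ bound on $G\sum_t [U(\bx_t,\bby_t) - U(\bx_t,\by_t)]$.

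To close, I would apply convexity of $c_t$ at $\delta_t := U(\bx_t,\by_t) - U(\bx_t,\by_t^*) \le 0$ with bounded subgradient $c_t'(\delta_t) \in [-G,0]$:
\[
c_t(\delta_t) \le c_t(0) + c_t'(\delta_t)\delta_t \le c_t(0) + G\bigl(U(\bx_t,\by_t^*) - U(\bx_t,\by_t)\bigr),
\]
and then invoke \eqref{eq:inf-feedback-relax} in the form $U(\bx_t,\by_t^*) - U(\bx_t,\by_t) \le \alpha^{-1}[U(\bx_t,\bby_t) - U(\bx_t,\by_t)] + \alpha^{-1}\xi_t$. Summing, substituting the OGD bound on the preference gap, and dividing by $T$ recovers the structure of \eqref{eq:conv-bound}.

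The main obstacle is matching the exact constants advertised in the theorem, namely the $|\mathcal{B}|G$ factor, the $2G/\alpha$ coefficient on $\sum_t \xi_t$, and the curious $|\mathcal{B}|G/T$ boundary correction. This suggests one should not collapse $\lambda_t := -c_t'(\delta_t) \in [0,G]$ to $G$ immediately, but instead combine $\lambda_t r_t$ with $\alpha$-informativity first and only then use $\lambda_t \le G$ on the slack piece, and be careful with the endpoint terms in the time-varying step-size telescope $\sum_{t=2}^T (\sqrt{t}-\sqrt{t-1})$ which naturally produce an $O(1/T)$ correction on top of the dominant $O(1/\sqrt{T})$ term.
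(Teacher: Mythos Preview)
The paper itself does not prove this theorem; it explicitly defers the argument to the extended arXiv version \cite{coactivearxiv}. There is therefore nothing in the present paper to compare your proposal against line by line.

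That said, your plan is exactly the standard route and almost certainly what the extended version does: treat Algorithm~\ref{convex} as projected OGD in the sense of \cite{Zink03} with surrogate gradients $g_t = G(\phi(\bx_t,\by_t)-\phi(\bx_t,\bby_t))$, derive the usual telescoping bound on $\sum_t g_t^\top(\bw_t-\bw_*)$, use the $\argmax$ choice of $\by_t$ to drop the $\bw_t$-part, and then linearize $c_t$ via convexity before invoking \eqref{eq:inf-feedback-relax}. Your identification of the $|{\cal B}|G/T$ term as the boundary correction from the time-varying step-size telescope is also on point.

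One caution on the constants you flag: the straightforward OGD telescope produces a diameter contribution of order $|{\cal B}|^2/(2\eta_T)$, i.e.\ $|{\cal B}|^2$ rather than the $|{\cal B}|G$ appearing in \eqref{eq:conv-bound}, and similarly $4R^2G^2$ rather than $4R^2G$ in the gradient-norm term; the slack coefficient also comes out as $G/\alpha$ rather than $2G/\alpha$ if you bound $\lambda_t\le G$ directly on $r_t\ge 0$ before applying \eqref{eq:inf-feedback-relax}. So either the extended version uses a slightly non-standard bookkeeping (e.g.\ absorbing a factor into $|{\cal B}|$ or splitting the telescope differently), or the stated constants are loose/typographical. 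Your argument is sound; just do not expect the displayed constants to fall out exactly from the textbook OGD inequality without some massaging.
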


In the bound \eqref{eq:conv-bound}, $c_t(0)$ is the minimum possible convex loss since $U(\bx_t,\by_t) - U(\bx_t,\by_t^*)$ can never be greater than zero by definition of $\by_t^*$. Thus the theorem upper bounds the average convex loss via the minimum achievable loss and the quality of feedback. Like the previous result (Theorem \ref{thm:bound}), under strict $\alpha$-informative feedback, the average loss  approaches the best achievable loss at ${\cal O}(1/\sqrt{T})$ albeit with larger constant factors.


\section{Experiments}
We empirically evaluated the Preference Perceptron algorithm on two datasets. The two experiments differed in the nature of prediction and feedback. While the algorithm operated on structured objects (rankings) in one experiment, atomic items (movies)  were presented and received as feedback in the other. 

\subsection{Structured Feedback: Learning to Rank}

We evaluated our Preference Perceptron algorithm on the Yahoo! learning to rank dataset \cite{ylr11}. This dataset consists of query-url feature vectors (denoted as $\bx^q_i$ for query $q$ and URL $i$), each with a relevance rating $r^q_i$ that ranges from zero (irrelevant) to four (perfectly relevant).  
To pose ranking as a structured prediction problem, we defined our joint feature map as follows:
\begin{align}
\label{eq:ranking-utility}
\bw^\top\phi(q,\by) =  \sum_{i=1}^5 \frac{\bw^\top \bx^q_{\by_i}}{\log(i+1)}.
\end{align}
In the above equation, $\by$ denotes a ranking such that $\by_i$ is the index of the URL which is placed at position $i$ in the ranking. Thus, the above measure
considers the top five URLs for a query $q$ and computes a score based on a graded relevance.  Note that the above utility function  defined via the feature-map is analogous to DCG@5 (see e.g. \cite{Manning/etal/08}) after replacing the relevance label with a linear prediction based on the features. 

For query $q_t$ at time step $t$, the Preference Perceptron algorithm presents the ranking $\by^q_t$ that maximizes $\bw_t^\top \phi(q_t,\by)$. Note that this merely amounts to sorting documents by the scores $\bw_t^\top \bx^{q_t}_i$, which can be done very efficiently. The utility regret in Eqn. \eqref{eq:linregret}, based on the definition of utility in \eqref{eq:ranking-utility}, is given by $\frac{1}{T} \sum_{t=1}^T\bw_*^{\top}(\phi(q_t,\by^{q_t*}) - \phi(q_t,\by^{q_t})) $. Here $\by^{q_t*}$ denotes the optimal ranking with respect to $\bw_*$, which is the best least squares fit to the relevance labels from the features using the entire dataset. Query ordering was randomly permuted twenty times and we report average and standard error of the results. 


\subsubsection{Strong Vs Weak Feedback}
The goal of the first experiment was to see how the regret of the algorithm changes with feedback quality. To get feedback at different quality levels $\alpha$, we used the following mechanism.  Given the predicted ranking $\by_t$, the user would go down the list until she found five URLs such that, when placed at the top of the list, the resulting $\bby_t$ satisfied the strictly $\alpha$-informative feedback condition w.r.t. the optimal $\bw_*$.  

\begin{figure}[h!!!]
\begin{center}
\hspace*{-0.3cm}\includegraphics[width=3.2in]{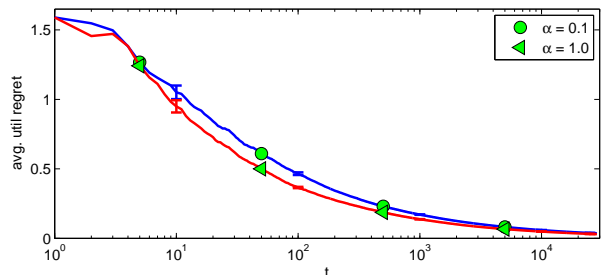}
\vskip -0.2in
\caption{\label{fig:clean-feedback} Regret based on strictly $\alpha$-informative feedback.}
\end{center}
\vskip -0.15in
\end{figure}

Figure \ref{fig:clean-feedback} shows the results for this experiment for two different $\alpha$ values. As expected, the regret with $\alpha=1.0$ is lower compared to the regret with respect $\alpha=0.1$. Note, however, that the difference between the two curves is much smaller than a factor of ten. This is because strictly $\alpha$-informative feedback is also strictly $\beta$-informative feedback for any $\beta \le \alpha$. So, there could be several instances where user feedback was much stronger than what was required. As expected from the theoretical bounds, since the user feedback is based on a linear model with no noise, utility regret approaches zero.

\subsubsection{Noisy Feedback}
In the previous experiment, user feedback was based on actual utility values computed from the optimal $\bw_*$. We next make use of the actual relevance labels provided in the dataset for user feedback. Now, given a ranking for a query, the user would go down the list inspecting the top 10 URLs (or all the URLs if the list is shorter) as before. Five URLs with the highest relevance labels ($r^q_i$) are placed at the top five locations in the user feedback. Note that this produces noisy feedback since no linear model can perfectly fit the relevance labels on this dataset.

As a baseline, we repeatedly trained a conventional Ranking SVM\footnote{{\tt http://svmlight.joachims.org}}. At each iteration, the previous SVM model was used to present a ranking to the user. The user returned a ranking based on the relevance labels as above. The pairs of examples $(q_t,\by^{q_t}_{svm})$ and $(q_t,\bby^{q_t}_{svm})$ were used as training pairs for the ranking SVMs. Note that training a ranking SVM after each iteration would be prohibitive, since it involves solving a quadratic program and cross-validating the regularization parameter $C$. Thus, we retrained the SVM whenever 10\% more examples were added to the training set. The first training was after the first iteration with just one pair of examples (starting with a random $\by^{q_1}$), and the $C$ value was fixed at $100$ until there were $50$ pairs of examples, when reliable cross-validation became possible. After there were more than $50$ pairs in the training set, the $C$ value was obtained via five-fold cross-validation. Once the $C$ value was determined, the SVM was trained on all the training examples available at that time. The same SVM model was then used to present rankings until the next retraining. 

\begin{figure}[h!!]
\begin{center}
\hspace*{-0.3cm}\includegraphics[width=3.2in]{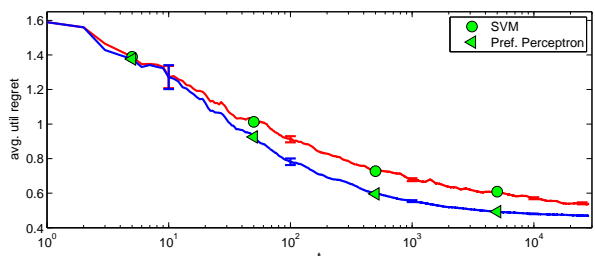}
\vskip -0.2in
\caption{\label{fig:rel-feedback} Regret vs time based on noisy feedback.}
\end{center}
\vskip -0.15in
\end{figure}

Results of this experiment are presented in Figure \ref{fig:rel-feedback}.   Since the feedback is now based on noisy relevance labels, the utility regret converges to a non-zero value as predicted by our theoretical results. Over most of the range, the Preference Perceptron performs significantly\footnote{The error bars are extremely tiny at higher iterations.} better than the SVM. Moreover, the perceptron experiment took around 30 minutes to run, whereas the SVM experiment took about 20 hours on the same machine.  We conjecture that the regret values for both the algorithms can be improved with better features or kernels,  but these extensions are orthogonal to the main focus of this paper. 

\subsection{Item Feedback: Movie Recommendation}

In contrast to the structured prediction problem in the previous section, we now evaluate the Preference Perceptron on a task with atomic predictions, namely movie recommendation. In each iteration a movie is presented to the user, and the feedback consists of a movie as well. We use the MovieLens dataset, which consists of a million ratings over 3090 movies rated by 6040 users. The movie ratings ranged from one to five. 

We randomly divided users into two equally sized sets. The first set was used to obtain a feature vector $\bm_j$ for each movie $j$ using the ``SVD embedding'' method for collaborative filtering (see \cite{Bell/Koren/07}, Eqn. (15)). The dimensionality of the feature vectors and the regularization parameters were chosen to optimize cross-validation accuracy on the first dataset in terms of squared error. For the second set of users, we then considered the problem of recommending movies based on the movie features $\bm_j$. This experiment setup simulates the task of recommending movies to a new user based on movie features from old users.


For each user $i$ in the second set, we found the best least squares approximation $\bw_{i*}^T \bm_j$ to the user's utility functions on the available ratings. This enables us to impute utility values for movies that were not explicitly rated by this user. Furthermore, it allows us to measure regret for each user as $\frac{1}{T} \sum_{t=1}^T \bw_{i*}^\top(\bm_{t*} - \bm_t)$, which is the average difference in utility between the recommended movie $\bm_t$ and the best available movie $\bm_{t*}$.  We denote the best available movie at time $t$ by $\bm_{t*}$, since in this experiment, once a user gave a particular movie as feedback, both the recommended movie and the feedback movie were removed from the set of candidates for subsequent recommendations. 

\subsubsection{Strong Vs Weak Feedback}

Analogous to the web-search experiments, we first explore how the performance of the Preference Perceptron changes with feedback quality $\alpha$. In particular, we recommended a movie with maximum utility according to the current $\bw_t$ of the algorithm, and the user returns as feedback a movie with the smallest utility that still satisfied strictly $\alpha$-informative feedback according to $\bw_{i*}$. For every user in the second set, the algorithm iteratively recommended 1500 movies in this way. Regret was calculated after each iteration and separately for each user, and all regrets were averaged over all the users in the second set.

\begin{figure}[h!!!]
\begin{center}
\hspace*{-0.3cm}\includegraphics[width=3.2in]{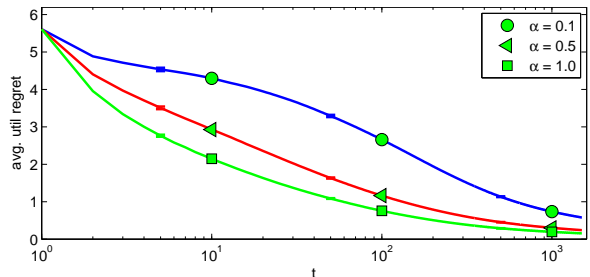}
\vskip -0.2in
\caption{\label{fig:movie} Regret for strictly $\alpha$-informative feedback.}
\end{center}
\vskip -0.15in
\end{figure}

Figure \ref{fig:movie} shows the results for this experiment. Since the feedback in this case is strictly $\alpha$-informative, the average regret in all the cases decreases towards zero as expected. Note that even for a moderate value of $\alpha$, regret is already substantially reduced after 10's of iterations. With higher $\alpha$ values, the regret converges to zero at a much faster rate than with lower $\alpha$ values.

\subsubsection{Noisy Feedback}

We now consider noisy feedback, where the user feedback does not necessarily match the linear utility model used by the algorithm. In particular, feedback is now given based on the actual ratings when available, or the score $\bu_{i*}^\top\bm_j$ rounded to the nearest allowed rating value. In every iteration, the user returned a movie with one rating higher than the one presented to her. If the algorithm already presented a movie with the highest rating, it was assumed that the user gave the same movie as feedback. 

\begin{figure}[h!!!]
\begin{center}
\hspace*{-0.3cm}\includegraphics[width=3.2in]{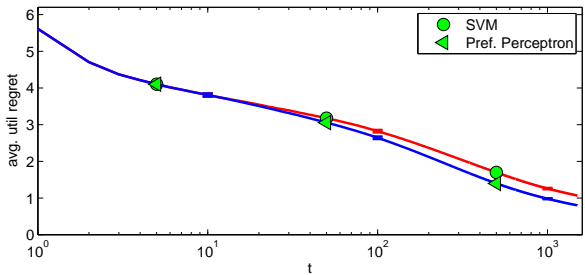}
\vskip -0.2in
\caption{\label{fig:movie2} Regret based on noisy feedback.}
\end{center}
\vskip -0.15in
\end{figure}

As a baseline, we again ran a ranking SVM. Like in the web-search experiment, it was retrained whenever 10\% more training data was added. The results for this experiment are shown in Figure~\ref{fig:movie2}. The regret of the Preference Perceptron is again significantly lower than that of the SVM, and at a small fraction of the computational cost.  

\section{Conclusions}
We proposed a new model of online learning where preference feedback is observed but cardinal feedback is never observed. We proposed a suitable notion of regret and showed that it can be minimized under our feedback model. Further, we provided several extensions of the model and algorithms. Furthermore, experiments demonstrated its effectiveness for web-search ranking and a movie recommendation task.  A future direction is to consider $\lambda$-strongly convex functions, and we conjecture it is possible to derive algorithms with ${\cal O}(\log(T)/T)$ regret in this case.

\paragraph{Acknowledgements} We thank Peter Frazier, Bobby Kleinberg, Karthik Raman and Yisong Yue for helpful discussions. This work was funded in part under NSF awards IIS-0905467 and IIS-1142251. 
\begin{small}
\bibliography{all}
\bibliographystyle{icml2012}
\end{small}
\end{document}